\crefname{table}{Tab.}{Tabs.}
\crefname{figure}{Fig.}{Figs.}
\crefname{definition}{Defn.}{Defns.}
\crefname{lemma}{Lemma}{Lemmata}
\crefname{section}{Sec.}{Secs.}
\crefname{appendix}{Appx.}{Appx.}
\crefname{theorem}{Thm.}{Thms.}
\crefname{equation}{Eq.}{Eqs.}
\newtheorem{theorem}{Theorem}
\newtheorem{lemma}{Lemma}
\newtheorem{definition}{Definition}
\newcommand\cifarten{CIFAR-10}
\newcommand\cifarhun{CIFAR-100}
\newcommand\imagenet{ImageNet}
\newcommand\fashionmnist{Fashion-MNIST}
\newcommand\intelimage{Intel Image}
\newcommand\lp{LP}
\newcommand\lpft{LP-FT}
\newcommand\ft{FT}
\title{On Transfer of Adversarial Robustness from Pretraining to Downstream Tasks}
\author{%
  Laura F.~Nern \\ 
  Yahoo Research \\
  \texttt{laurafee.nern@yahooinc.com} \\
  \And
  Harsh Raj \\
  Delhi Technological University\\
  \texttt{harsh777111raj@gmail.com} \\
  \And
  Maurice Georgi \\
  Hyundai Mobis \\
  \texttt{georgimaurice@gmail.com} \\
  \And
  Yash Sharma \\
  University of T\"ubingen\\
  \texttt{yash.sharma@bethgelab.org} \\
}
\begin{document}

\definecolor{darkpastelgreen}{RGB}{106, 168, 79}
\definecolor{veryred}{RGB}{255, 0, 0}

\maketitle

\begin{abstract}
As large-scale training regimes have gained popularity, the use of pretrained models for downstream tasks has become common practice in machine learning. 
While pretraining has been shown to enhance the performance of models in practice, the transfer of robustness properties from pretraining to downstream tasks remains poorly understood. 
In this study, we demonstrate that the robustness of a linear predictor on downstream tasks can be constrained by the robustness of its underlying representation, regardless of the protocol used for pretraining.
We prove (i) a bound on the loss that holds independent of any downstream task, as well as (ii) a criterion for robust classification in particular.
We validate our theoretical results in practical applications, show how our results can be used for calibrating expectations of downstream robustness, 
and when our results are useful for optimal transfer learning.
Taken together, our results 
offer an initial step towards characterizing the requirements of the representation function for reliable post-adaptation performance.\footnote{\url{https://github.com/lf-tcho/robustness_transfer}}
\end{abstract}

\section{Introduction}

Recently, there has been a rise of models (BERT~\citep{kenton2019bert}, GPT-3~\citep{brown2020language}, and CLIP~\citep{radford2021learning}) trained on large-scale datasets which have found application to a variety of downstream tasks~\citep{Bommasani2021FoundationModels}. 
Such pretrained representations enable training a predictor, e.g., a linear ``head'', for substantially increased performance in comparison to training from scratch~\citep{chen2020simple,chen2020improved}, particularly when training data is scarce.

Generalizing to circumstances unseen in the training distribution is significant for real-world applications~\citep{albadawy2018deep,yu2020bdd100k,jean2016combining}, and is crucial for the larger goal of robustness~\citep{hendrycks2021unsolved}, where the aim is to build systems that can handle extreme, unusual, or adversarial situations. In this context, adversarial attacks~\citep{szegedy2013intriguing,biggio2013evasion}, i.e., inducing model failure via minimal perturbations to the input, provide a stress test for predictors, and have remained an evaluation framework where models trained on large-scale data have consistently struggled~\citep{wallace2019universal,Fort2021CLIPadversarial,fort2022adversarial}, particularly when, in training, adversarials are not directly accounted for~\citep{goodfellow2014explaining, madry2018towards}. 

With that said, theoretical evidence~\citep{bubeck2021universal} has been put forth that posits overparameterization is required for adversarial robustness, which is notable given its ubiquity in training on large-scale datasets. 
Thus, given the costs associated with large-scale deep learning, we are motivated to study how adversarial robustness transfers from pretraining to downstream tasks. 

For this, considering the representation function to be the penultimate layer of the downstream predictor, we contribute theoretical results which bound the robustness of the downstream predictor by the robustness of the representation function. 
\Cref{fig:1} illustrates the intuition behind our theory, and shows that a linear classifier's robustness is dependent on the robustness of the pretrained representation.
Our theoretical results presented in~\cref{sec:thm1} and~\cref{sec:thm-classifier} are empirically studied in~\cref{sec:exp}, where we validate the results in practice, and in~\cref{sec:exp-rob-score} to show how the theory can be used for calibrating expectations of the downstream robustness in a self-supervised manner~\citep{balestriero2023cookbook}. In all, our theoretical results show how the quality of the underlying representation function affects robustness downstream, providing a lens on what can be expected upon deployment. We present the following contributions:
\begin{itemize}
    \item  we theoretically study the relationship between the robustness of the representation function, referred to as \emph{adversarial sensitivity (AS) score}, and the prediction head. In~\cref{thm:adversarial-training}, we prove a bound on the difference between clean and adversarial loss in terms of the AS score. In~\cref{thm:robust-classifier}, we prove a sufficient condition for robust classification again conditioned on the robustness of the representation function. 
    \item we evaluate the applicability and consistency of our theory in applications in~\cref{sec:exp-thm-CE} and~\cref{sec:exp-class}. Furthermore, in~\cref{sec:exp-rob-score}, we illustrate the utility of the AS score to predict expected robustness transfer via linear probing.
    \item in~\cref{sec:finetuning}, we study how linear probing compares to alternatives, like full finetuning, in robustness transfer for robust performance downstream.
\end{itemize}


\begin{figure}
    \centering
    \includegraphics[width=0.5\textwidth]{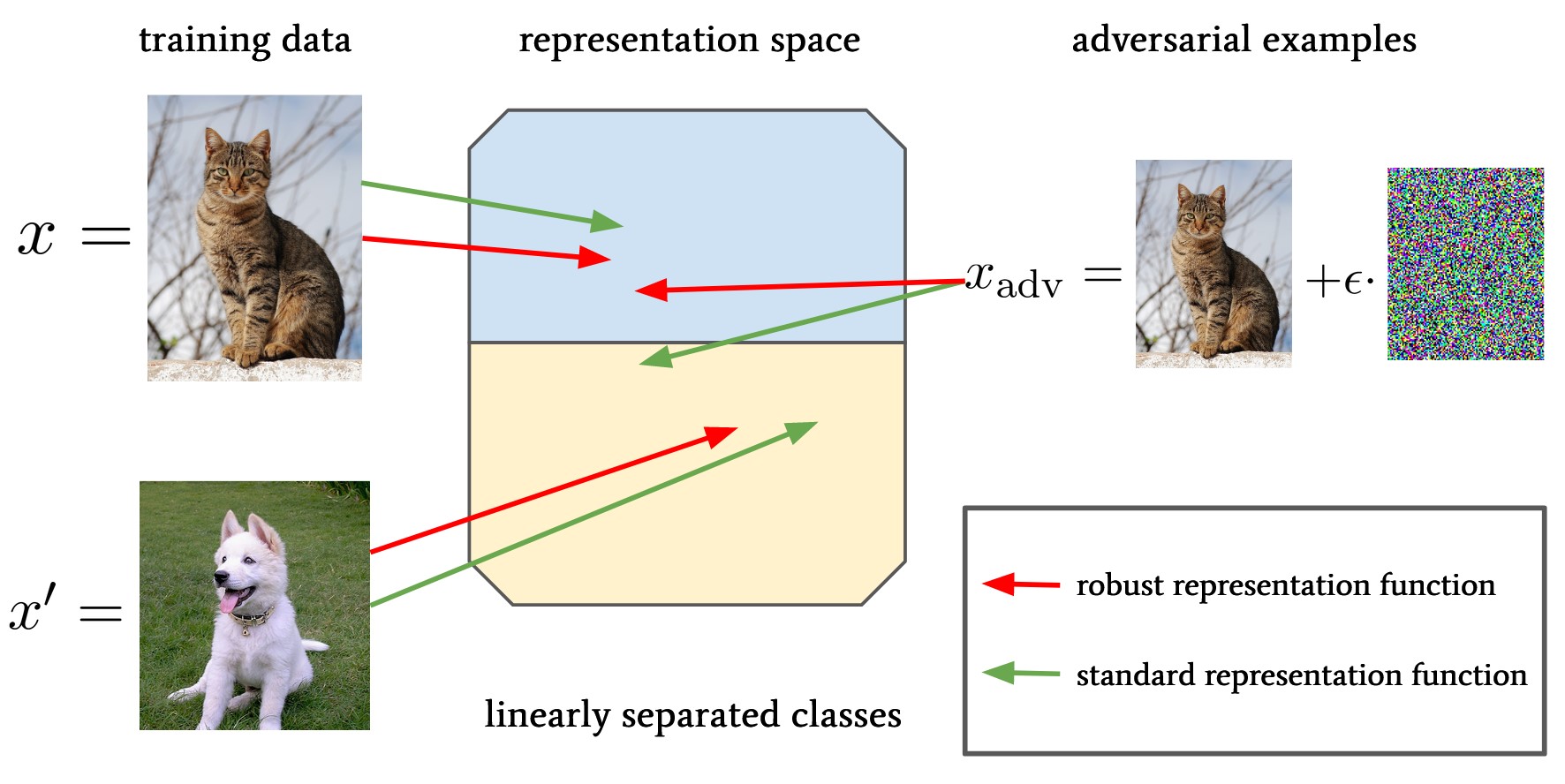}
    \caption{\textbf{Overview}. We distinguish between a robust (\textcolor{veryred}{red}) and non-robust (\textcolor{darkpastelgreen}{green}) representation by its sensitivity to adversarial attacks, and use both for linear separation of the representation space, e.g., binary classification. This illustrates the dependence of the classifier on the representation's robustness, i.e., the predictor can only be robust if $x_{adv}$ and $x$ can be linearly separated from $x'$ in representation space.}
    \label{fig:1}
\end{figure}

\section{Related Work}\label{sec:related-work}

\textbf{Transfer Learning:} While empirical analyses of pretraining abound~\citep{kornblith2019better,chen2021empirical,zhai2019large,peters2019tune,andreassen2021evolution, salman2020adversarially}, a number of theoretical analyses have also been contributed~\citep{wu2020understanding,tripuraneni2020theory,du2020few,chua2021fine,shachaf2021theoretical,kumar2022fine, li2021improved}. 
The majority of the theoretical studies aim to understand the benefit of transfer learning with respect to improved performance and data requirements compared to training from scratch. Therefore, the referenced studies include results on sample complexity~\citep{tripuraneni2020theory,du2020few,chua2021fine,shachaf2021theoretical} and generalization bounds~\citep{li2021improved} for fine-tuning and linear probing in various scenarios.
In this work, we primarily focus on the linear probing approach and perform a theoretical analysis to investigate its efficiency in transferring adversarial robustness. 

\textbf{Robustness:} There is a direct connection between the mathematical formulation of adversarial robustness and the definition of a local Lipschitz constant of the network~\citep{hein2017formal}, but computing this quantity is intractable for general neural networks. However, there exists a line of work providing more practical approaches in certifiable robustness~\citep{wong2018provable, raghunathan2018certified, cohen2019certified, li2019certified, levine2019certifiably, salman2019provably}, which can provide guarantees on the detection of adversarial examples. Relative to empirical approaches~\citep{goodfellow2014explaining,madry2018towards,ding2019mma,salman2020adversarially,wong2020fast}, certifiable methods have comparatively struggled to scale to datasets such as ImageNet~\citep{deng2009imagenet}. Furthermore, likely due to computational expense, adversarial training has not yet been adopted by models trained on large-scale datasets~\citep{kenton2019bert,brown2020language,radford2021learning}. Still, a number of works have incorporated adversarial training into pretraining, in particular, self-supervised contrastive learning~\citep{kim2020adversarial, jiang2020robust,chen2020adversarial,fan2021when}. With that said, our work is focused on the analysis of robustness irrespective of the pretraining approach. 

Regarding analysis,~\citep{yang2020closer} studied the trade-off between accuracy and robustness, and showed that, under a set of separation conditions on the data, a robust classifier is guaranteed to exist. Their work focuses on proving the robustness of a network itself and therefore requires Lipschitzness of the model.
Further,~\citep{bubeck2021universal} showed that overparameterization is needed for learning such smooth (i.e. Lipschitz) functions, which is critical for robustness. On the other hand, we specifically study the relationship between the robustness of the representation function and the prediction head, a scenario consistent with the practice of pretraining for transfer learning.

\textbf{Transfer Learning \& Robustness:} The question of how robustness is transferred from pretraining to downstream task has been studied empirically in the prior literature~\citep{Shafahi2020Adversarially,yamada2022does,huang2020transferable,zhang2021pre}. Notably, while~\citep{Shafahi2020Adversarially} observe that robust representations underly robust predictors,~\citep{zhang2021pre} also observe that non-robust representations underly non-robust predictors. An empirical analysis of adversarial robustness transfer for the special case of linear probing is included in ,e.g.,~\citep{Shafahi2020Adversarially}. 
Our work not only derives the first theoretical results and concrete proof for the empirical phenomena reported in these studies but also provides further empirical investigation.

\section{Robustness Bound}
\label{sec:thm1}

Our first theoretical result concerns the discrepancy between the standard and adversarial loss of a neural network $f$. The considered loss function $\ell$ is evaluated on a dataset of input samples
$\mathbb{X}=\{x_1,\dots, x_n\}\subset \mathcal{X}\subset \mathbb{R}^d$ and labels $\mathbb{Y}=\{y_1,\dots, y_n\}$, where $x_i$ is sampled from $\mathcal{X}$ with probability distribution $\mathcal{P}(\mathcal{X})$.\\
For the network, we consider a model $f:\mathbb{R}^d\rightarrow \mathbb{R}^c$ with a final linear layer, i.e. $f(x)=f_{W,\theta}(x)=W g_{\theta}(x)$. The function $g_{\theta}:\mathbb{R}^d\rightarrow \mathbb{R}^r$ can be considered as a representation function for the input parameterized by $\theta$, and $W\in \mathbb{R}^{c\times r}$ is the weight matrix of the linear layer. Note that this composition is both standard practice~\citep{chen2020simple,chen2020improved}, and has been extensively studied~\citep{wu2020understanding,tripuraneni2020theory,du2020few} in the literature.

For a given data pair $(x_i, y_i)$, we are interested in the stability of the model's prediction of $y_i$ with respect to (w.r.t.) perturbations of the input $x_i$ as our robustness measure.  Specifically, for $\delta\in\mathbb{R}^d$, we find the maximum loss $\ell(f(x_i+\delta), y)$ under a specified constraint $\Vert \delta\Vert<\epsilon$ based on some norm $\Vert \cdot \Vert$. Note that the defined measure is commonplace in the literature on adversarial attacks~\citep{szegedy2013intriguing,goodfellow2014explaining}. 

We introduce the following notation for the standard loss $\mathcal{L}(f_{W, \theta})$ and the adversarial loss $\mathcal{L}_{\mathrm{adv}}(f_{W, \theta})$,
 \begin{align*}
     \mathcal{L}(f_{W, \theta}) &:= \mathbb{E}_{\mathcal{P}(\mathcal{X})} [ \ell(f_{W,\theta}(x),y) ], \\
     \mathcal{L}_{\mathrm{adv}}(f_{W, \theta}) &:=
      \mathbb{E}_{\mathcal{P}(\mathcal{X})} \left[ \underset{\Vert \delta\Vert_{} <\epsilon}{\mathrm{max}}\, \ell(f_{W,\theta}(x+\delta),y) \right].
 \end{align*}

We aim to bound the effect of adversarial attacks in terms of their impact on the representation function, i.e. $\Vert g_{\theta}(x_i)- g_{\theta}(x_i +\delta) \Vert_2$. 
To obtain such a result for a general loss function $\ell$, the variation in the loss must be bounded in relation to the proportional change in $f(x_i)$. 
This condition is formalized by the concept of Lipschitzness,

\begin{definition}\label{def:Lipschitz}
A function $\ell:\mathbb{R}^d\rightarrow \mathbb{R}$ is $C$-Lipschitz in the norm $\Vert \cdot\Vert_{\alpha}$ on $\mathcal{R}\subset\mathbb{R}^d $,  if 
$$\vert \ell(r)- \ell(r')\vert < C \,\Vert r- r'\Vert_{\alpha}, $$ 
for any $r,r'\in\mathcal{R}$.
\end{definition}
The following theorem is valid for any loss function $\ell$ that satisfies~\cref{def:Lipschitz} w.r.t. $L_1$, $L_2$ or $L_{\infty}$ norm.

\begin{theorem}\label{thm:adversarial-training}
We assume that the loss $\ell(r,y)$ is $C_1$-Lipschitz in $\Vert\cdot\Vert_{\alpha}$, for $\alpha\in\{ 1,2,\infty \}$, for $r\in f(\mathcal{X})$ with $C_1>0$ and bounded by $C_2>0$, i.e., $0\leq\ell(r,y)\leq C_2$ $\forall r\in f(\mathcal{X})$. Then, for a subset $\mathbb{X}$ independently drawn from $\mathcal{P}(\mathcal{X})$, the following holds with probability of at least $1-\rho$,
\begin{align*}
  &\mathcal{L}_{\mathrm{adv}}(f_{W,\theta}) - \mathcal{L}(f_{W,\theta}) \ \leq\ L_{\alpha}( W ) \, C_1 \, \frac{1}{n}\sum_{i=1}^n \underset{\Vert \delta\Vert_{} <\epsilon}{\mathrm{max}}\, \Vert  g_{\theta}(x_i+\delta) - g_{\theta}(x_i)\Vert_2 \, + \, C_2 \sqrt{\frac{\log(\rho/2)}{-2n}},
\end{align*}
where
\begin{align*}
    L_{\alpha}( W ) := \begin{cases} \Vert W \Vert_2 &,\ \text{if } \Vert \cdot\Vert_{\alpha} =\Vert \cdot \Vert_2 ,\\
    \sum_i \Vert W_i \Vert_2 &,\ \text{if } \Vert \cdot\Vert_{\alpha} =\Vert \cdot \Vert_1 ,\\
    \mathrm{max}_i \Vert W_i \Vert_2 &,\ \text{if } \Vert \cdot\Vert_{\alpha} =\Vert \cdot \Vert_{\infty} .
    \end{cases}
\end{align*}
\end{theorem}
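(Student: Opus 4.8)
The plan is to split the argument into a deterministic, per-sample Lipschitz estimate followed by a single concentration step. First I would fix a data pair $(x,y)$ and an arbitrary perturbation $\delta$ with $\Vert\delta\Vert<\epsilon$, and use that the linear head factors out the perturbation: $f_{W,\theta}(x+\delta)-f_{W,\theta}(x)=W\big(g_\theta(x+\delta)-g_\theta(x)\big)$. Applying the $C_1$-Lipschitzness of $\ell(\cdot,y)$ in $\Vert\cdot\Vert_\alpha$ then gives $\ell(f(x+\delta),y)-\ell(f(x),y)\le C_1\,\Vert W(g_\theta(x+\delta)-g_\theta(x))\Vert_\alpha$.

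The technical heart is a mixed-norm operator estimate, namely $\Vert Wv\Vert_\alpha\le L_\alpha(W)\,\Vert v\Vert_2$ for every $v\in\mathbb{R}^r$ and each $\alpha\in\{1,2,\infty\}$; this is exactly what lets the representation displacement be measured in $\Vert\cdot\Vert_2$ on the right-hand side irrespective of $\alpha$. For $\alpha=2$ this is the definition of the spectral norm $\Vert W\Vert_2$. For $\alpha=1$ and $\alpha=\infty$ I would write $(Wv)_i=\langle W_i,v\rangle$ for the $i$-th row $W_i$, apply Cauchy--Schwarz $|\langle W_i,v\rangle|\le\Vert W_i\Vert_2\Vert v\Vert_2$, and then either sum over $i$ (giving $\sum_i\Vert W_i\Vert_2$) or take the maximum over $i$ (giving $\max_i\Vert W_i\Vert_2$), matching the three cases of $L_\alpha(W)$. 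Combining with the Lipschitz step and maximizing over $\delta$, where I use that $\ell(f(x),y)$ is independent of $\delta$ so the maximum passes through the difference, yields the per-sample bound $\max_{\Vert\delta\Vert<\epsilon}\ell(f(x+\delta),y)-\ell(f(x),y)\le C_1 L_\alpha(W)\max_{\Vert\delta\Vert<\epsilon}\Vert g_\theta(x+\delta)-g_\theta(x)\Vert_2$.

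For the statistical part I would introduce the random variable $Z:=\max_{\Vert\delta\Vert<\epsilon}\ell(f(x+\delta),y)-\ell(f(x),y)$, whose expectation under $\mathcal{P}(\mathcal{X})$ is precisely $\mathcal{L}_{\mathrm{adv}}(f_{W,\theta})-\mathcal{L}(f_{W,\theta})$. Since $\delta=0$ is feasible and $0\le\ell\le C_2$, one has $Z\in[0,C_2]$, so Hoeffding's inequality applies to the i.i.d.\ sample $Z_1,\dots,Z_n$. The two-sided form at confidence $1-\rho$ (equivalently the one-sided form at level $\rho/2$) gives $\mathbb{E}[Z]\le\frac{1}{n}\sum_i Z_i+C_2\sqrt{\log(2/\rho)/(2n)}$ with probability at least $1-\rho$, and since $\log(\rho/2)=-\log(2/\rho)$ the term equals $C_2\sqrt{\log(\rho/2)/(-2n)}$, which is exactly the stated concentration term; this is where the factor $2$ inside the logarithm comes from. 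Finally I would upper-bound the empirical mean $\frac{1}{n}\sum_i Z_i$ by the per-sample deterministic estimate of the previous paragraph, which immediately assembles the claimed inequality.

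I expect the main obstacle to be bookkeeping rather than anything conceptual: getting the three cases of $L_\alpha(W)$ correct while keeping the inner norm on $g_\theta$ fixed at $\Vert\cdot\Vert_2$ as the outer Lipschitz norm $\alpha$ varies. A secondary point requiring care is justifying that the maximum over $\delta$ commutes appropriately, i.e.\ that the clean term is $\delta$-independent and that a bound holding pointwise in $\delta$ may be maximized on both sides, together with matching the probability level so that the two-sided Hoeffding constant produces $\log(\rho/2)$ rather than $\log(\rho)$.
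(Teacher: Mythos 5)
Your proposal is correct and follows essentially the same route as the paper's proof: the mixed-norm estimate $\Vert Wv\Vert_\alpha\le L_\alpha(W)\Vert v\Vert_2$ via Cauchy--Schwarz is exactly the paper's auxiliary lemma, and the concentration step is the same two-sided Hoeffding bound on the per-sample loss gap $D_i\in[0,C_2]$. The only difference is presentational (you do the deterministic Lipschitz bound first and concentrate afterwards, while the paper applies Hoeffding first), which does not change the argument.
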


The proof of the theorem can be found in~\cref{app:thm-train}.
Note that the norm used to define the perturbation $\delta$ is arbitrary and only needs to be consistent on both sides of the bound.

\textbf{Proof sketch:} 
First, we apply Hoeffding's inequality, which requires a fixed function $f_{W,\theta}$, independently drawn samples $\{x_1,\dots, x_n\}$, and an upper bound $C_2$ on the loss function. Next, we bound the average losses utilizing the $C_1$-Lipschitzness in $\Vert\cdot\Vert_{\alpha}$ of the loss function. Further, the Cauchy-Schwarz inequality is used to seperate the norms of $W$ and the representation function $g_{\theta}$.

\textbf{Intuition:}
\Cref{thm:adversarial-training} shows that the effect an $\epsilon$-perturbation has on the representation function $g_{\theta}$ indeed upper bounds the difference between the standard and adversarial loss. Thus, we characterize the relationship between how vulnerable the linear predictor and the underlying representation is to adversarial examples. Further, \Cref{thm:adversarial-training} formalizes the impact the weight matrix $W$ has on robustness transfer and thereby provides a theoretical argument for regularization in this context. 

The statement of~\cref{thm:adversarial-training} is formulated in terms of the expectation of the losses over $\mathcal{P}(\mathcal{X})$. In particular, the second summand on the right hand-side (RHS) of the bound illustrates the finite-sample approximation error between the expectation and the average over $n$ samples in $\mathbb{X}$. The following lemma provides a version of the theorem for the average difference of the losses over a fixed finite sample $\mathbb{X}$, which can directly be concluded from the proof of~\cref{thm:adversarial-training} in~\cref{app:thm-train}. 
We use this lemma to study our theoretic result in the experiments in~\cref{sec:exp-thm-CE}.

\begin{lemma}[Finite-sample version of~\cref{thm:adversarial-training}]\label{lem:adversarial-training}
We assume that the loss $\ell(r,y)$ is $C$-Lipschitz in $\Vert\cdot\Vert_{\alpha}$, for $\alpha\in\{ 1,2,\infty \}$, for $r\in f(\mathcal{X})$ with $C>0$. Then, the following holds for any dataset $(\mathbb{X}, \mathbb{Y})$,
\begin{align*}
  \frac{ \frac{1}{n}\sum_{i=1}^n \underset{\Vert \delta\Vert_{} <\epsilon}{\mathrm{max}}\, \ell(f_{W,\theta}(x_i+\delta),y_i) - \ell(f_{W,\theta}(x_i),y_i)}{L_{\alpha}( W ) \, C} \ \leq\   \frac{1}{n}\sum_{i=1}^n \underset{\Vert \delta\Vert_{} <\epsilon}{\mathrm{max}}\, \Vert  g_{\theta}(x_i+\delta) - g_{\theta}(x_i)\Vert_2 .
\end{align*}
\end{lemma}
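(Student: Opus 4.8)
The plan is to prove the inequality sample-by-sample and then average, bypassing the Hoeffding step used in \cref{thm:adversarial-training} (since the lemma concerns the empirical average over a fixed $\mathbb{X}$ rather than the expectation over $\mathcal{P}(\mathcal{X})$). First I would fix an index $i$ and let $\delta_i^{\star}$ denote a maximizer of $\ell(f_{W,\theta}(x_i+\delta),y_i)$ over the constraint set $\{\|\delta\|<\epsilon\}$. Then $\max_{\|\delta\|<\epsilon}\ell(f_{W,\theta}(x_i+\delta),y_i)-\ell(f_{W,\theta}(x_i),y_i)$ equals $\ell(f_{W,\theta}(x_i+\delta_i^{\star}),y_i)-\ell(f_{W,\theta}(x_i),y_i)$, which by the $C$-Lipschitzness of $\ell$ in $\|\cdot\|_{\alpha}$ (\cref{def:Lipschitz}) is bounded above by $C\,\|f_{W,\theta}(x_i+\delta_i^{\star})-f_{W,\theta}(x_i)\|_{\alpha}$.

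Next I would exploit the linear structure $f_{W,\theta}=W g_{\theta}$ to rewrite the argument of the norm as $W\bigl(g_{\theta}(x_i+\delta_i^{\star})-g_{\theta}(x_i)\bigr)$, and separate $W$ from the representation gap via a row-wise Cauchy--Schwarz argument. Writing $W_j$ for the $j$-th row of $W$ and setting $v:=g_{\theta}(x_i+\delta_i^{\star})-g_{\theta}(x_i)\in\mathbb{R}^r$, the elementary bound $|W_j v|\le\|W_j\|_2\,\|v\|_2$ yields all three cases simultaneously: summing over $j$ gives $\|Wv\|_1\le(\sum_j\|W_j\|_2)\|v\|_2$; taking the maximum gives $\|Wv\|_{\infty}\le(\max_j\|W_j\|_2)\|v\|_2$; and the operator-norm inequality gives $\|Wv\|_2\le\|W\|_2\|v\|_2$. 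In each case the factor multiplying $\|v\|_2$ is exactly $L_{\alpha}(W)$, so the per-sample loss gap is at most $C\,L_{\alpha}(W)\,\|g_{\theta}(x_i+\delta_i^{\star})-g_{\theta}(x_i)\|_2$.

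I would then make the key replacement step: because $\delta_i^{\star}$ was chosen to maximize the loss rather than the representation distance, I upper-bound $\|g_{\theta}(x_i+\delta_i^{\star})-g_{\theta}(x_i)\|_2$ by $\max_{\|\delta\|<\epsilon}\|g_{\theta}(x_i+\delta)-g_{\theta}(x_i)\|_2$, which is valid since $\delta_i^{\star}$ lies in the same constraint set. Averaging the resulting inequality over $i=1,\dots,n$ and dividing both sides by $C\,L_{\alpha}(W)>0$ produces precisely the claimed bound.

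The argument is essentially the deterministic core of \cref{thm:adversarial-training} with the concentration term stripped away, so I do not anticipate a genuine obstacle. The two points that require care are the case analysis for $L_{\alpha}(W)$ --- verifying that the single row-wise Cauchy--Schwarz inequality assembles correctly into the $\|\cdot\|_1$, $\|\cdot\|_2$, and $\|\cdot\|_{\infty}$ operator bounds --- and the observation that the loss-maximizing perturbation need not coincide with the representation-distance-maximizing one, which is exactly why the final relation is an inequality rather than an equality.
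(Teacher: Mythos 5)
Your proposal is correct and follows essentially the same route as the paper: the paper derives this lemma as the deterministic core of the proof of \cref{thm:adversarial-training}, namely Lipschitzness of $\ell$ applied to the loss-maximizing perturbation, the row-wise Cauchy--Schwarz bounds of \cref{lem:norm-bounds} to obtain $L_{\alpha}(W)$, and the final upper-bounding of the representation gap at $\delta_i^{\star}$ by the maximum over the constraint set. Both of the points you flag as requiring care are exactly the steps the paper takes, so no further comparison is needed.
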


\Cref{lem:adversarial-training} shows that the normalized difference between average clean and adversarial loss can be bound by the average robustness of the representation function on the RHS, which we denote as the representation function's \emph{adversarial sensitivity (AS) score} on the dataset $\mathbb{X}$.

\subsection{Valid Loss Functions}\label{sec:use-cases}

To elaborate on the applicability of the result, we discuss exemplary scenarios in which the assumptions of our theory are fulfilled in practice. 
As a first example, we consider a classification task where we use the softmax cross-entropy loss. While the cross-entropy loss itself is not Lipschitz continuous, since the logarithm is not, the softmax cross-entropy loss is indeed Lipschitz. We define the softmax cross-entropy loss as follows,
\begin{align*}
    \ell_{CE}(f(x),y) = -\log\left( p_y(x) \right) = -\log\left( \frac{\exp\{f(x)_y\}}{\sum_{i=1}^c \exp\{f(x)_i\} } \right), 
\end{align*}
where $p_y(x)$ denotes the y-th entry of the softmax function applied to the final layer, $f(x)_y$ is the y-th entry of the model output with $f(x)_y=W_y g(x)$ and $W_y$ being the y-th row of the matrix $W$. 
In~\cref{lem:CE-Lipschitz} (in~\cref{app:proofs}), we prove that $\ell_{CE}$ is 2-Lipschitz in $\Vert\cdot\Vert_{\infty}$.

Further, the theory is also valid for regression tasks, where the mean square error (MSE) $$ \ell_{MSE}(f(x),y) = \Vert f(x) - y \Vert_2 $$ is used to measure the loss. This loss function is 1-Lipschitz in $\Vert \cdot\Vert_2$, since, by the reverse triangle property, it holds that
$$ \vert \Vert f(x_1) - y \Vert_2 - \Vert f(x_2) - y \Vert_2  \vert\leq \Vert f(x_1) - f(x_2) \Vert_2 .$$
Additional losses which satisfy~\cref{def:Lipschitz} include the logistic, hinge, Huber and quantile loss~\citep{chinot2020robust}.

\section{Robustness Classification Criterion}\label{sec:thm-classifier}
 
In~\cref{sec:thm1}, we presented a theoretical result independent of any particular downstream task and valid for various loss functions. 
Here, we focus on classification tasks, and, in this context, we provide an additional statement that studies the robustness of a linear classifier w.r.t. adversarial attacks. We refer to a classifier as robust if there does not exist a bounded input perturbation that changes the classifier's prediction.

In particular, we consider the classifier $$c(x)=\mathrm{argmax}_i\, f(x)_i,$$ where $f(x)= W g_{\theta}(x)$ is the model as defined in~\cref{sec:thm1}. 
The following theorem provides insights on how the robustness of the underlying representation function $g_{\theta}$ impacts the robustness of a linear classifier.
Specifically, it formally describes how the robustness of the representation, i.e. the AS score on a datapoint $x$, determines the set of weight matrices which yield a robust classifier.

\begin{theorem}\label{thm:robust-classifier}
The classifier $c(x)$ is robust in $x$, i.e., $c(x+\delta)=c(x)$ for $\delta\in\mathbb{R}^d$ with $\Vert\delta\Vert<\epsilon$, if for $x\in\mathcal{X}$ with $c(x)=y$ it holds that
\begin{align}
   \Vert g_{\theta}(x+\delta)-g_{\theta}(x)\Vert_2\ \leq \ \min_{j\neq y}\ \frac{\vert f(x)_y - f(x)_j\vert}{\Vert W_y - W_j \Vert_2}. \label{eq:robust-classification-condition}
\end{align}
\end{theorem}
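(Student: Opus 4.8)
The plan is to reduce the multi-class robustness condition to a family of pairwise margin comparisons and then control each margin under perturbation with the Cauchy--Schwarz inequality. Since $c(x)=y$ means $f(x)_y=\max_i f(x)_i$, it suffices to show that the perturbed prediction still ranks coordinate $y$ on top, i.e. that $f(x+\delta)_y - f(x+\delta)_j \geq 0$ for every competitor $j\neq y$. Accordingly, I would fix an arbitrary $j\neq y$ and analyze the single margin $f(x+\delta)_y - f(x+\delta)_j$.

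First I would expand this margin using the linear structure $f(x)_i = W_i\,g_\theta(x)$, so that
$$f(x+\delta)_y - f(x+\delta)_j = (W_y - W_j)\,g_\theta(x+\delta).$$
Adding and subtracting $g_\theta(x)$ splits this into a clean margin plus a perturbation term,
$$(W_y - W_j)\,g_\theta(x) \;+\; (W_y - W_j)\bigl(g_\theta(x+\delta) - g_\theta(x)\bigr),$$
where the first term equals $f(x)_y - f(x)_j$, which is nonnegative because $y=c(x)$ is the clean winner and therefore coincides with $|f(x)_y - f(x)_j|$.

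Next I would lower-bound the perturbation term by applying Cauchy--Schwarz to the inner product of the row vector $W_y - W_j$ with the representation displacement $g_\theta(x+\delta) - g_\theta(x)$, giving
$$(W_y - W_j)\bigl(g_\theta(x+\delta) - g_\theta(x)\bigr) \;\geq\; -\,\Vert W_y - W_j\Vert_2\,\Vert g_\theta(x+\delta) - g_\theta(x)\Vert_2.$$
Combining the two pieces, the margin is at least $|f(x)_y - f(x)_j| - \Vert W_y - W_j\Vert_2\,\Vert g_\theta(x+\delta) - g_\theta(x)\Vert_2$, which is nonnegative exactly when $\Vert g_\theta(x+\delta) - g_\theta(x)\Vert_2 \le |f(x)_y - f(x)_j|/\Vert W_y - W_j\Vert_2$. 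Taking the minimum over all $j\neq y$ recovers precisely the hypothesis~\eqref{eq:robust-classification-condition}, so under that hypothesis every pairwise margin remains nonnegative and hence $c(x+\delta)=y=c(x)$.

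The computation is routine, so the points needing care are bookkeeping rather than genuine obstacles. I would be sure to take the minimum over \emph{all} competitors $j\neq y$, since a single worst-case $j$ can flip the decision; to keep the direction of the Cauchy--Schwarz estimate correct, as a lower bound on a signed inner product is what is required; and to be careful about strict versus non-strict inequalities so that the conclusion is consistent with the tie-breaking convention implicit in the $\mathrm{argmax}$. A minor modeling remark worth stating is that the left-hand side of~\eqref{eq:robust-classification-condition} itself depends on $\delta$, so the cleanest reading of the statement is that any $\delta$ whose induced representation displacement satisfies the bound leaves the prediction unchanged.
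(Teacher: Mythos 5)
Your proposal is correct and follows essentially the same route as the paper's proof: the same decomposition of the perturbed margin $(W_y-W_j)\,g_\theta(x+\delta)$ into the clean margin $f(x)_y-f(x)_j$ plus a perturbation term, the same application of Cauchy--Schwarz to bound that term by $\Vert W_y-W_j\Vert_2\,\Vert g_\theta(x+\delta)-g_\theta(x)\Vert_2$, and the same minimization over competitors $j\neq y$. Your closing remarks on strict versus non-strict inequalities and on the $\delta$-dependence of the left-hand side are reasonable bookkeeping points that the paper itself glosses over, but they do not change the argument.
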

The proof of the theorem can be found in~\cref{app:thm-classifier}.

\textbf{Proof sketch:} To prove robustness, we need to show that $f(x+\delta)_y > f(x+\delta)_j$ if $f(x)_y > f(x)_j$. The proof consists of rewriting this condition and applying the Cauchy-Schwarz inequality to separately bound in terms of the difference in $W$ and $g_{\theta}$.

\textbf{Intuition:} \Cref{thm:robust-classifier} shows that the effect an $\epsilon$-perturbation has on the representation function $g_{\theta}$ determines the necessary class separation by the linear predictor for robust classification to be ensured. 
Specifically, the RHS of~\cref{thm:robust-classifier} can be understood as a normalized margin between class logits for $x$, where the classification margin refers to the difference between the classification score for the true class and maximal classification score from all false classes. Thus, the theorem states that if the margin is larger than the effect of an $\epsilon$-perturbation on the representation function, then the classifier is robust. Note that if a datapoint does not satisfy the bound, i.e. LHS is greater than the RHS, the robustness of the classifier is ambiguous. In this case, the RHS margin is not the exact separation, or the optimal margin, between robust and non-robust classification.~\cref{fig:classifier-margins} provides intuition for this interpretation of the result in a binary classification scenario. 

\begin{figure}
    \centering
    \includegraphics[width=0.45\textwidth]{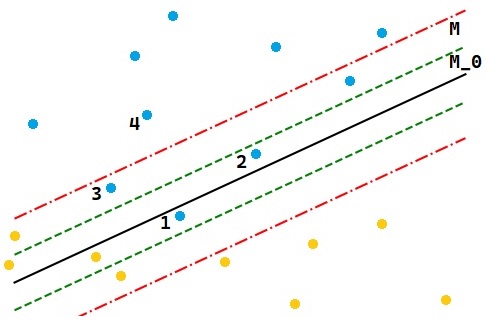}
    \caption{The graphic shows two classes of datapoints and a linear classifier's decision boundary (black). The line $M_0$ (green) is the optimal margin defining the boundary between a successful or failed adversarial attack. Line $M$ (red) represents the margin derived from~\cref{thm:robust-classifier} which is an upper bound on $M_0$. Four datapoints from the blue class are highlighted: 1 is wrongly classified, 2 is a non-robust example, 3 is robust but does not fulfill equation (\ref{eq:robust-classification-condition}), and 4 is a robust classification fulfilling the bound in  (\ref{eq:robust-classification-condition}). }
    \label{fig:classifier-margins}
\end{figure}

\section{Empirical Study of Theory}\label{sec:exp}

The following experiments serve to validate the theoretical results in practical applications. For the finite sample version of~\cref{thm:adversarial-training},~\cref{lem:adversarial-training}, we consider a classification task in~\cref{sec:exp-thm-CE}, and a regression task in~\cref{sec:exp-mse}. In~\cref{sec:exp-class}, we study the robustness condition established in~\cref{thm:robust-classifier}.

In the following sections, we train a linear predictor on top of a pretrained representation function for a specific downstream task, i.e., we perform transfer learning via linear probing. To study robustness transfer, we use robustly pretrained representation functions and examine if the transferred robustness is consistent with the theoretical bounds. 
As our robust representation function, we use models robustly pretrained on \cifarhun{} \citep{addepalli2022efficient,cifar},  and \imagenet{} \citep{SalmanIlyas, deng2009imagenet}. Both models are available on RobustBench \citep{croce2020robustbench} and are adversarially trained using the softmax cross-entropy loss and $L_{\infty}$-attacks.
For our downstream tasks, we consider \cifarten{} \citep{cifar}, \fashionmnist{} \citep{FashionMNIST} and \intelimage{}\footnote{\url{https://www.kaggle.com/datasets/puneet6060/intel-image-classification}}. Further experimental details can be found in~\cref{app:sec-exp-setup}.

\subsection{Classification - Cross-Entropy Loss}\label{sec:exp-thm-CE}

Here, we empirically evaluate~\cref{lem:adversarial-training}, the finite sample version of~\cref{thm:adversarial-training}, on classification tasks. As per~\cref{sec:use-cases}, for softmax cross-entropy, we can apply~\cref{lem:adversarial-training} with Lipschitz constant $C=2$ and mapping function $L_{\infty}(W)$.

In~\cref{table:CE_lemma_bounds}, we present the left-hand side (LHS), the scaled difference between clean and robust cross-entropy (CE) loss, and the right-hand side (RHS), the representation function's AS score, of~\cref{lem:adversarial-training}. Note that, for LHS, the attack on $f$ maximizes the CE loss, while for RHS, the attack on $g_{\theta}$ maximizes  the impact on the representation function, i.e. $\Vert g_{\theta}(x_i)- g_{\theta}(x_i +\delta) \Vert_2$. See~\cref{app:sec-exp-setup} for further details.

The results in~\cref{table:CE_lemma_bounds} validate our theoretical predictions in practice, as the LHS is indeed always upper bounded by the RHS. Furthermore, we see signs of correlation between the LHS and RHS, i.e. significant changes on one side of the bound are accompanied by similar changes on the other.~\Cref{table:CE_lemma_bounds_reg} further highlights the connection between our result and weight norm regularization, as it confirms that a smaller weight norm, i.e., smaller $L_{\alpha}( W )$, corresponds to a smaller decrease in robustness, i.e., a smaller difference between the losses.

\begin{table}[h]
\caption{For the \cifarhun{} and \imagenet{}  pretraining datasets, the LHS and RHS, i.e., the AS score, of~\cref{lem:adversarial-training} are computed for each of the \cifarten{}, \fashionmnist{} and \intelimage{} downstream tasks. Additionally, $L_{\infty}( W )$ and the absolute difference between clean and adversarial CE loss are provided.}
\label{table:CE_lemma_bounds}
\centering
\begin{tabular}{ r c c c c c c}
\toprule
 & \multicolumn{3}{c}{\cifarhun{}}& \multicolumn{3}{c}{\imagenet{}}   \\ 
  \cmidrule(r){2-4}     \cmidrule(r){5-7}
& \cifarten{}  & F-MNIST & \intelimage{}  & \cifarten{} & F-MNIST  & \intelimage{}  \\ \midrule
Diff. CE & 0.74 & 0.8 & 0.47 & 4.14 & 9.4 & 0.14 \\
$L_{\infty}( W )$  & 4.4 & 4.6 & 3.1 & 1.9 & 2.0 & 1.4 \\
LHS &  0.08  & 0.09 & 0.08 & 1.11 & 2.33 & 0.05\\
AS score & 0.98 & 2.11 & 1.05 & 15.7 & 53.2 & 2.75 \\
\bottomrule
\end{tabular}
\end{table}

\begin{table}[h]
\caption{The same datasets and metrics are considered as in~\cref{table:CE_lemma_bounds}. Here, however, L2 weight matrix regularization ($\lambda=0.01$) is employed for all downstream tasks.}
\label{table:CE_lemma_bounds_reg}
\centering
\begin{tabular}{ r c c c c c c}
\toprule
 & \multicolumn{3}{c}{\cifarhun{}}& \multicolumn{3}{c}{\imagenet{}}   \\ 
  \cmidrule(r){2-4}     \cmidrule(r){5-7}
& \cifarten{}  & F-MNIST & \intelimage{}  & \cifarten{} & F-MNIST  & \intelimage{}  \\ \midrule
Diff. CE & 0.46 & 0.51 & 0.38 & 3.57 & 7.3 & 0.14 \\
$L_{\infty}( W )$  & 2.1 & 2.3 & 2.2 & 1.4 & 1.5 & 1.2 \\
LHS &  0.11  & 0.11 & 0.09 & 0.81 & 2.43 & 0.06\\
AS score & 0.98 & 2.11 & 1.05 & 15.7 & 53.2 & 2.75 \\
\bottomrule
\end{tabular}
\end{table}

\subsection{Classification Criterion}\label{sec:exp-class}

Here, we focus on empirically analysing~\cref{thm:robust-classifier}, which specifies an upper bound on the representation's sensitivity against $\epsilon$-perturbations, where the condition ensures robustness of the classifier against local perturbations for each datapoint.

In~\cref{table:robust_classification}, we observe that indeed: if condition (\ref{eq:robust-classification-condition}) is fulfilled, the classification is always robust, as can be seen from "rob. fulfilled". With that said, we do observe instances of robust classification which do not fulfill~\cref{eq:robust-classification-condition}, which we proved ensures robustness, as can be seen from comparing "robust accuracy" to "prop. fulfilled". As with~\cref{lem:adversarial-training}, we see that tighter results would be necessary for identifying all transferred robustness. Furthermore, as discussed in~\cref{sec:thm-classifier}, since datapoints fulfilling our bound are even further away from the classification boundary than necessary, we also find that fulfilling the bound is predictive of robustness to stronger attacks, as can be seen in~\cref{app:exp-stronger-attack}. 

For the scenario of pretraining on \imagenet{} and transferring to \cifarten{} or \fashionmnist{}, almost no datapoint from the test set fulfills condition (\ref{eq:robust-classification-condition}). This observation is accompanied by a steep drop from clean accuracy to robust accuracy, which suggests this is likely a consequence of the large distribution shift between \imagenet{} and the lower-resolution images of \cifarten{} ($32\times32$) and \fashionmnist{} ($28\times28$) resized to \imagenet{} resolution ($256\times256$). Note that this distribution shift is evident regardless of how the images are scaled before applying the ImageNet representation function, see~\cref{app:distribution-shift}. 
In~\cref{table:CE_lemma_bounds}, the correlation between distribution shift and loss of robustness is also captured prior to transfer learning by a large AS score, and~\cref{app:distribution-shift} provides further insight into this connection. 

\begin{table}[h]
\caption{For the \cifarhun{} and \imagenet{}  pretraining datasets, we report clean \& robust accuracy, the proportion of images fulfilling the theoretical bound, and, of the images fulfilling the theoretical bound, the proportion of images that are robust for each of the \cifarten{}, \fashionmnist{} and \intelimage{} downstream tasks. 
}
\label{table:robust_classification}
\centering
\begin{tabular}{c c c c  c c c }
\toprule
 & \multicolumn{3}{c}{\cifarhun{}}& \multicolumn{3}{c}{\imagenet{}}   \\ 
  \cmidrule(r){2-4}     \cmidrule(r){5-7}
  & \cifarten{}  & F-MNIST & \intelimage{}  & \cifarten{} & F-MNIST  & \intelimage{}  \\ \midrule
  clean accuracy & 70.9\% & 84\% & 78.5\% & 92\% & 87\% & 91.2\% \\
robust accuracy &  42.2\% & 56.2\% & 56.8\% & 16\% & 0\% & 85.3\%\\
prop. fulfilled &  13.4\% & 10.1\% & 26.3\% & 0.1\% & 0\% & 56.2\% \\
rob. fulfilled &  100\% & 100\% & 100\% & 100\%  & - & 100\% \\
\bottomrule
\end{tabular}
\end{table}

\section{Adversarial Sensitivity Score}\label{sec:exp-rob-score}
In the prior sections, we empirically validated that our theoretical results and required assumptions are realistic and hold in applications. 
Note that, in both of our results, the robustness of the representation function determines the robustness that transfers to downstream tasks. Specifically,~\cref{lem:adversarial-training} is stated using the \emph{adversarial sensitivity (AS) score} of the representation function on a dataset $\mathbb{X}$,
\begin{align}
    \frac{1}{n}\sum_{i=1}^n \underset{\Vert \delta\Vert_{} <\epsilon}{\mathrm{max}}\, \Vert  g_{\theta}(x_i+\delta) - g_{\theta}(x_i)\Vert_2, \label{eq:rep-robustness-score}
\end{align}
to bound the difference between clean and adversarial loss. Before specifying a downstream task, we can compute the representation function's adversarial sensitivity score on a set of unlabeled input images. In the following, we explore how to calibrate expectations on the robustness upon transfer of the representation function via linear probing to a particular downstream task based on the AS score.

\subsection{Robustness Transfer via Linear Probing}

Here, we explore how the AS score can be used to provide valuable insights on the transferability of robustness with linear probing.
In~\cref{table:pre_training_as_data}, we find that if the AS score on the unlabeled pretraining and downstream datasets are similarly small, then it is reasonable to expect that robustness will transfer via linear probing. However, if the AS score on the downstream task is much higher, 
it is reasonable to expect that robustness will not transfer via linear probing. 

In~\cref{table:pre_training_as_data}, we compare the AS score to the relative difference between clean and robust CE loss on each dataset, i.e., (CE- robust CE)/CE. 
Indeed, the similar AS scores of \cifarhun{}, \cifarten{}, and \intelimage{} upon pretraining on \cifarhun{} correspond to a successful transfer of robustness, whereas the dissimilar adversarial sensitivity scores of \imagenet{}, \fashionmnist{}, and \cifarten{} upon pretraining on \imagenet{} reflect the striking loss of robustness upon transfer. One explanation for a larger AS score is a shift in distribution between pretraining and downstream data. This connection is confirmed in an additional empirical analysis in \cref{app:distribution-shift}.



\begin{table}[h]
\caption{
For the \cifarhun{} and \imagenet{}  pretraining datasets, we report the AS score (\cref{eq:rep-robustness-score}), clean \& robust cross-entropy loss, and the relative difference between said losses for the pretraining dataset, as well as the \cifarten{}, \fashionmnist{} and \intelimage{} downstream tasks.
}
\label{table:pre_training_as_data}
\centering
\begin{tabular}{ c c c c c }
\toprule
Pretraining & Downstream  & AS score  & clean /robust CE & relative dif. \\ \midrule
\cifarhun{} &\cifarhun{} & 1.0 & 2.71 / 3.27 & 0.21\\
&\cifarten{} & 0.98 & 0.82 / 1.55 & 0.9 \\
&\fashionmnist{} & 2.11 & 0.45 / 1.25 & 1.8 \\
&\intelimage{} & 1.05 & 0.61 / 1.08 & 0.76\\ \midrule
\imagenet{}&\imagenet{} &  1.27 & 1.38 / 1.64 & 0.19\\
&\cifarten{} & 15.7 & 0.22 / 4.36 & 18.8\\
&\fashionmnist{} & 53.2 &  0.36 / 9.76 & 26.1\\
&\intelimage{}  & 2.75 & 0.25 / 0.39 & 0.57\\
\bottomrule
\end{tabular}
\end{table}




\subsection{Beyond Linear Probing?}\label{sec:finetuning}

In the prior section, we observe a correlation between AS score and the difference between clean and robust CE, i.e., the AS score is predictive of robustness transfer via linear probing. However, linear probing is not the only transfer learning approach, what can we say about alternatives?

In finetuning, the representation function is changed, and thus also the AS score changes post-transfer. Thus, we can expect the AS score computed pre-transfer is less predictive of the downstream robustness for finetuning than for linear probing. To gain a better understanding, 
we study the difference between clean/robust accuracy for linear probing (LP), full finetuning (FT) as well as linear probing then finetuning (LP-FT)~\citep{kumar2022fine}, where LP-FT was proposed to combine the benefits of the two approaches.  
The main results can be found in~\cref{table:beyond_lp}, and additional details about the experiment are provided in~\cref{app:exp-stanford}.

\cref{table:beyond_lp} shows the performance of each transfer learning method applied to the model robustly pretrained on \cifarhun{} or \imagenet{}. On \cifarhun{}, \lp{} preserves the most robust accuracy, whereas \lpft{} performs better on \imagenet{}. Note that the AS score for \lp{} is highlighted, as this is the score we can obtain prior to transfer learning, which allows us to calibrate expectations on downstream performance.

We find no correlation between \lp{} and \ft{} performance, and do not observe a predictive relationship between AS score pre-transfer and robustness transfer via FT.
Instead, we observe successful robustness transfer via \ft{} and \lpft{} for cases with high AS score pre-transfer. 
This provides us with a first indication that the adversarial sensitivity score may point to scenarios where \ft{} preserves more robustness downstream than \lp{}.

\begin{table}[h]
\caption{Clean accuracy (Acc) and robust accuracy (RAcc) for the three transfer learning approaches using the model robustly pretrained on \cifarhun{}. }
\label{table:beyond_lp}
\centering
\begin{tabular}{c c c c c c c c c c}
\toprule
\cifarhun{} & \multicolumn{3}{c}{{\cifarten{}}} & \multicolumn{3}{c}{{\fashionmnist{}}}  & \multicolumn{3}{c}{{\intelimage{}}} \\ 
\textbf{Method}  & Acc  & RAcc & AS  & Acc  & RAcc & AS  & Acc  & RAcc & AS   \\ \midrule
\ft{} &   \textbf{94.2\%} & 3.9\%  & 8.93&  \textbf{95.1\%} & 40\% & 6.68 & \textbf{90.1\%} & 7.5\% & 6.86 \\
\lp{} & 70.9\% & \textbf{42.2\%} & \textbf{0.98} & 84\% & \textbf{56.2}\% & \textbf{2.11} &  78.5\% & \textbf{56.8\%} & \textbf{1.05}\\
\lpft{} & 93.9\% & 1.4\% & 6.44&  95\% & 36\% & 4.27 & 89.2\% & 27.1\% & 3.02\\ \midrule
\imagenet{} & \multicolumn{3}{c}{{\cifarten{}}} & \multicolumn{3}{c}{{\fashionmnist{}}}  & \multicolumn{3}{c}{{\intelimage{}}} \\ 
\textbf{Method}  & Acc  & RAcc & AS  & Acc  & RAcc & AS  & Acc  & RAcc & AS   \\ \midrule
\ft{} &  \textbf{98.6\%}  & 38.4\%  & 12.6  &  \textbf{95.7\%} &  63.5\% &  9.0 & \textbf{93.9\%}  & 85.8\% & 3.8 \\
\lp{} & 92.4\% & 15.8\% & \textbf{15.7}  & 87.2\% & 0\% & \textbf{53.2} &  91.2\% & 85.3\%  & \textbf{2.7} \\
\lpft{} & 98.5\% &  \textbf{44.5\%} & 9.8 &  95.6\% & \textbf{73.9\%} & 5.1 & \textbf{93.9\%} & \textbf{85.9}\% & 3.9 \\
\bottomrule
\end{tabular}
\end{table}

\section{Discussion}
\textbf{Significance:} If the representation does not change at all in response to $\epsilon$-perturbations, it may seem immediately clear that any predictor operating on said representation is robust. However, if the representation function \emph{is} variant to $\epsilon$-perturbations, how much robustness is lost as a result is dependent on the nature of the predictor function under consideration and the distribution shift in the data from pretraining to target task. In this work, we consider a linear predictor, and formally compute the exact effect this function can have on the test-time robustness.

\textbf{Comparison of the theorems:} The theoretical statement in~\cref{thm:adversarial-training} is closely related to the one in~\cref{thm:robust-classifier}, as a more robust predictor to adversarial attacks leads to a smaller difference in the loss between clean and adversarial examples, and vice versa.~\cref{thm:adversarial-training} provides a bound on the robustness based on the difference of the loss function, while~\cref{thm:robust-classifier} only considers the specific form of the classifier and thus information about and restrictions on the used loss function is unnecessary.
However, studying the loss function makes~\cref{thm:adversarial-training} valid in a much broader set of scenarios, and further provides reasoning on why using weight norm regularization during linear probing improves robustness transfer, see~\Cref{sec:exp-thm-CE}. \\
The bound in~\cref{thm:robust-classifier}, on the other hand, provides insights on how robustness of a classifier can be guaranteed. In particular, one could use our results to derive robustness guarantees post-transfer from attacks on the representation function pre-transfer.


\textbf{Clarification:} Note that we consider a model $f$ as a composition of a linear layer and a representation function, i.e., $f(x)=W g_{\theta}(x)$, but do not specify how the parameters of $f$ are learned. As our motivating example, the representation function $ g_{\theta}$ is optimized on a pretraining task and subsequently $W$ is trained w.r.t. a downstream task. However, one could also approach the downstream task with the classical strategy of training the model end-to-end instead, i.e., optimizing $\tilde{W}$ and $\tilde{\theta}$ simultaneously in $f_{\tilde{W}, \tilde{\theta} } = \tilde{W} g_{\tilde{\theta}}$, assuming the same architecture as before. The parameters $\theta, W$ and $\tilde{\theta}, \tilde{W}$ obtained from the two strategies may be very different. For example, the representation function $g_{\theta}$ may not be as useful for linearly separating the dataset into $c$ classes as $g_{\tilde{\theta}}$, which was optimized specifically for this task. This observation is described as a task mismatch between pretraining and the supervised downstream task. \\
Our contributed theoretical results demonstrate that we can study robustness by studying the learned functions themselves, irregardless of how said functions were optimized in the first place. With that said, this does not mean that the aforementioned task mismatch does not affect the model's robustness. \cref{thm:adversarial-training}'s bound will loosen if, e.g., $g_{\theta}$ does not enable linear separation of the dataset into $c$ classes, 
and if the AS score of the representation function increases due to this task mismatch or distribution shift in the data. We provide a study on the impact of the distribution shift in the data on the AS score in~\cref{app:distribution-shift}.

Finally, while the linear probing protocol is standard practice~\citep{chen2020simple,chen2020improved}, alternative methodology exists for leveraging a representation function for a specified downstream task. For one, we could consider a model $f$ as a composition of a nonlinear function $h$ and a representation function, i.e. $f(x)=h\circ g_{\theta}(x)$, a scenario our theoretical statements do not account for. 
Furthermore, alternative approaches can include updating the parameters of the representation function $g_{\theta}$, e.g., full finetuning. As suggested by the results in~\cref{sec:finetuning}, while a high AS score may not be predictive of robustness transfer for finetuning, it may instead indicate when finetuning is required to transfer robustness. 
Overall, the theoretical dependencies between the robustness of the representations and transfer learning alternatives to linear probing present interesting opportunities for future analysis.


\section{Conclusion}
We prove two theorems which demonstrate that the robustness of a linear predictor on downstream tasks can be bound by the robustness of its underlying representation. 
We provide a result applicable to various loss functions and downstream tasks, as well as a more direct result for classification. Our theoretical insights can be used for analysis when representations are transferred between domains, and sets the stage for carefully studying the robustness of adapting pretrained models.   

\subsubsection*{Acknowledgements}
We thank the ICLR 2022 Diversity, Equity \& Inclusion Co-Chairs, Rosanne Liu and Krystal Maughan, for starting the CoSubmitting Summer (CSS) program, through which this collaboration began. We thank the ML Collective community for the ongoing support and feedback of this research. The authors thank the International Max Planck Research School for Intelligent Systems (IMPRS-IS) for supporting YS. This work was supported by the German Federal Ministry of Education and Research (BMBF): Tübingen AI Center, FKZ: 01IS18039A.

\small
\bibliography{bib}

\newpage

\normalsize

\appendix

\section{Proofs and Auxiliary Results}\label{app:proofs}

We state two auxiliary results first and subsequently provide the full proofs for~\cref{thm:adversarial-training} and~\cref{thm:robust-classifier} in~\cref{app:thm-train} and~\cref{app:thm-classifier}.

\begin{lemma}\label{lem:norm-bounds}
    For a matrix $W\in\mathbb{R}^{c\times r}$ and a vector $g\in\mathbb{R}^r$, the following bounds hold,
    \begin{enumerate}
        \item $\Vert W g \Vert_2 \leq \Vert W \Vert_2 \Vert g \Vert_2  $,
        
        \item $\Vert W g \Vert_1 \leq \sum_{i=1}^c \Vert W_i \Vert_2 \Vert g \Vert_2  $,
        
        \item $\Vert W g \Vert_{\infty} \leq \mathrm{max}_i \Vert W_i \Vert_2 \Vert g \Vert_2  $.
        
    \end{enumerate}
\end{lemma}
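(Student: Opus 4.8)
The plan is to reduce all three bounds to elementary facts about the rows of $W$. Writing $W_i\in\mathbb{R}^r$ for the $i$-th row of $W$, the $i$-th coordinate of the product is the inner product $(Wg)_i=\langle W_i, g\rangle$. For the first bound I would simply invoke the definition of the spectral (operator) norm induced by the Euclidean norm: since $\Vert W\Vert_2=\sup_{v\neq 0}\Vert Wv\Vert_2/\Vert v\Vert_2$, the inequality $\Vert Wg\Vert_2\leq \Vert W\Vert_2\,\Vert g\Vert_2$ holds by definition for every $g$, so nothing further is needed here.

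For the second and third bounds the key and essentially only tool is the Cauchy--Schwarz inequality applied row by row, which gives $\vert\langle W_i, g\rangle\vert\leq \Vert W_i\Vert_2\,\Vert g\Vert_2$ for each $i$. For the $\ell_1$ bound I would expand $\Vert Wg\Vert_1=\sum_{i=1}^c\vert\langle W_i, g\rangle\vert$ and sum the row-wise estimate, obtaining $\sum_{i=1}^c\Vert W_i\Vert_2\,\Vert g\Vert_2$. For the $\ell_\infty$ bound I would write $\Vert Wg\Vert_\infty=\max_i\vert\langle W_i, g\rangle\vert$, bound each term by $\Vert W_i\Vert_2\,\Vert g\Vert_2\leq(\max_i\Vert W_i\Vert_2)\,\Vert g\Vert_2$, and then take the maximum over $i$.

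There is essentially no hard step: the whole lemma is just a repackaging of Cauchy--Schwarz together with the definition of the induced $2$-norm. The only point requiring a little care is notational consistency, namely making sure $W_i$ is read as a \emph{row} of $W$ (so that $(Wg)_i=\langle W_i, g\rangle$ is valid) and that $\Vert W_i\Vert_2$ denotes the Euclidean norm of that row vector, so that the three cases line up exactly with the definition of $L_{\alpha}(W)$ appearing in~\cref{thm:adversarial-training}.
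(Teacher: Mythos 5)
Your proposal is correct, and parts 2 and 3 coincide exactly with the paper's argument: expand $(Wg)_i = W_i g$ row by row, apply Cauchy--Schwarz to each inner product, then sum (for $\ell_1$) or take the maximum (for $\ell_\infty$). The only divergence is in part 1. You treat $\Vert W\Vert_2$ as the induced operator (spectral) norm, in which case the bound is true by definition and needs no argument. The paper instead applies Cauchy--Schwarz coordinatewise inside the square root, obtaining $\Vert Wg\Vert_2 \leq \sqrt{\sum_{i=1}^c \Vert W_i\Vert_2^2}\,\Vert g\Vert_2$, and labels the quantity $\sqrt{\sum_i \Vert W_i\Vert_2^2}$ as $\Vert W\Vert_2$ --- that is, the paper's derivation actually produces the Frobenius norm of $W$. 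Both readings yield a valid inequality (the spectral norm is dominated by the Frobenius norm, so your version is the tighter of the two), and either suffices for the way $L_2(W)$ is used in \cref{thm:adversarial-training}; but if you want your proof to match the paper's proof line for line, you should note that their $\Vert W\Vert_2$ in this lemma is the row-wise $\ell_2$ aggregate rather than the operator norm. Your closing remark about reading $W_i$ as a row and keeping the three cases aligned with $L_\alpha(W)$ is exactly the right point of care.
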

\begin{proof}
    \begin{enumerate}
        \item $\Vert W g \Vert_2 = \sqrt{\sum_{i=1}^c (W_i g)^2} \leq  \sqrt{\sum_{i=1}^c (\Vert W_i\Vert_2 \Vert g\Vert_2)^2} = \Vert W \Vert_2 \Vert g \Vert_2  $, where the second step follows by Cauchy-Schwartz inequality.
        
        \item $\Vert W g \Vert_1 = \sum_{i=1}^c \vert W_i g\vert \leq  \sum_{i=1}^c \Vert W_i\Vert_2 \Vert g\Vert_2 = \sum_{i=1}^c \Vert W_i \Vert_2 \Vert g \Vert_2  $, where the second step follows again by Cauchy-Schwartz inequality.
        
        \item $\Vert W g \Vert_{\infty} =  \mathrm{max}_i \vert W_i g\vert \leq   \mathrm{max}_i \Vert W_i\Vert_2 \Vert g\Vert_2 $, where the second step follows by Cauchy-Schwartz inequality.
    \end{enumerate}
\end{proof}

The following Lemma proves a Lipschitz condition for the cross-entropy-softmax loss. 

\begin{lemma}\label{lem:CE-Lipschitz}
We define the cross-entropy loss with softmax function for a vector $f\in\mathbb{R}^c$ and a class index $y\in\{1,\dots,c\}$, i.e.,
$$ \ell(f,y):=  -\log\left( p_y(f) \right) = -\log\left( \frac{\exp\{f_y\}}{\sum_{i=1}^c \exp\{f_i\} } \right). $$
Then, the following Lipschitz condition holds for $\ell(f,y)$,
\begin{align*}
    \vert \ell(a, y) - \ell(b, y) \vert \leq \ 2 \ \Vert a-b\Vert_{\infty},
\end{align*}
for any vectors $a,b\in \mathbb{R}^c$ and an arbitrary class index $y\in\{1,\dots,c\}$.
\end{lemma}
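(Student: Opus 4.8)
The plan is to prove the Lipschitz bound for the softmax cross-entropy loss by recognizing that $\ell(f,y) = -\log p_y(f)$ and computing the gradient of $\ell$ with respect to $f$. The key observation is that the softmax cross-entropy loss is differentiable in $f$, so I would bound $|\ell(a,y)-\ell(b,y)|$ using the mean value theorem (or integrating along the segment from $a$ to $b$), which reduces the problem to bounding $\|\nabla_f \ell\|_1$, the dual norm of $\|\cdot\|_\infty$. A standard computation gives
\begin{align*}
    \frac{\partial \ell}{\partial f_i} = p_i(f) - \mathbf{1}[i=y],
\end{align*}
where $p_i(f)$ is the $i$-th softmax output. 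Thus the gradient is the vector of softmax probabilities minus the one-hot label vector $e_y$.

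Given this, the main step is to bound $\|\nabla_f \ell\|_1 = \sum_{i=1}^c |p_i(f) - \mathbf{1}[i=y]|$. I would split the sum into the $i=y$ term, contributing $|p_y(f)-1| = 1-p_y(f)$ since $p_y(f)\in(0,1)$, and the remaining terms $\sum_{i\neq y} p_i(f)$. Because the softmax outputs are nonnegative and sum to one, $\sum_{i\neq y} p_i(f) = 1-p_y(f)$ as well. Hence $\|\nabla_f\ell\|_1 = 2(1-p_y(f)) \leq 2$. Combining this with the first-order bound
\begin{align*}
    |\ell(a,y)-\ell(b,y)| \leq \sup_{t\in[0,1]} \|\nabla_f \ell(b+t(a-b),y)\|_1 \, \|a-b\|_\infty \leq 2\,\|a-b\|_\infty
\end{align*}
yields the claimed $2$-Lipschitzness in $\|\cdot\|_\infty$, using Hölder's inequality to pair the $\ell_1$ gradient norm with the $\ell_\infty$ norm of the displacement.

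The only genuine obstacle is justifying the first-order argument rigorously, since softmax cross-entropy, while smooth, is only $C$-Lipschitz once the supremum of the gradient norm over the segment is uniformly bounded; here the bound $\|\nabla_f\ell\|_1\le 2$ holds pointwise and uniformly, so the mean value inequality applies cleanly along the line segment $t\mapsto b+t(a-b)$. I would therefore present the gradient computation, establish the uniform $\ell_1$ bound of $2$, and then invoke the fundamental theorem of calculus to integrate. A minor point worth noting is that softmax is translation-invariant (adding a constant to all coordinates of $f$ leaves $\ell$ unchanged), which is automatically consistent with the gradient lying in the subspace summing to zero after subtracting $e_y$; this need not be used directly but confirms the computation. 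No deeper difficulty arises, as the bound is tight and the argument is self-contained.
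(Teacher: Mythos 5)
Your proof is correct, but it takes a genuinely different route from the paper. The paper argues purely algebraically: it rewrites $\ell(a,y)-\ell(b,y)$ as $(b_y-a_y)-\log\bigl(\sum_j p_j(a)\exp\{b_j-a_j\}\bigr)$, applies Jensen's inequality to the convex function $-\log$ against the softmax weights $p_j(a)$, and then bounds the two resulting terms $\vert b_y-a_y\vert$ and $\sum_j p_j(a)\vert b_j-a_j\vert$ each by $\Vert a-b\Vert_\infty$. You instead compute the gradient $\nabla_f\ell = p(f)-e_y$, observe the uniform dual-norm bound $\Vert\nabla_f\ell\Vert_1 = 2(1-p_y(f))\leq 2$, and integrate along the segment with H\"older's inequality. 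Both yield the constant $2$ via essentially the same $1+(1-p_y)\le 2$ accounting, but your calculus route has two small advantages: it produces the sharper local constant $2(1-p_y(f))$, making explicit that the bound degrades only as the predicted probability of the true class vanishes, and it sidesteps the slightly delicate step in the paper where a one-sided Jensen bound is threaded through an absolute value. The paper's argument, in exchange, is fully elementary and requires no differentiability or mean-value machinery. Either proof suffices for the downstream use in \cref{thm:adversarial-training}.
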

\begin{proof}
    \begin{align*}
    \vert \ell(a, y) - \ell(b, y) \vert &=\ \vert-\log\Big(\frac{ \exp\{a_y \} }{ \sum_{j=1}^c \exp\{a_j \} }\Big) + \log\Big( \frac{\exp\{b_y\} }{ \sum_{j=1}^c \exp\{b_j\} } \Big)\vert \\
    &=\ \vert\log\Big( \exp\{b_y - a_y \} \Big) - \log\Big( \frac{ \sum_{j=1}^c \exp\{b_j\} }{ \sum_{j=1}^c \exp\{a_j\} } \Big) \vert \\
    &=\ \vert \big( b_y - a_y \big) - 
    \log\Big(  \sum_{j=1}^c \frac{ \exp\{a_j\} }{ \sum_{l=1}^c \exp\{a_l\} } \exp\{b_j- a_j\} \Big)  \vert \\
    &\leq\ \vert \big( b_y - a_y \big) - 
     \sum_{j=1}^c p_j(a)  \big(b_j- a_j \big) \vert,
\end{align*}
where $ p_j(a) := \frac{ \exp\{a_j\} }{ \sum_{l=1}^c \exp\{a_l\} }$ denotes the soft-max output of the vector $a$. The inequality in the last line follows by Jensen's inequality, since $-\log$ is convex and the soft-max probabilities $p_j$ sum up to one.
\begin{align*}
    \vert \big( b_y - a_y \big) - 
     \sum_{j=1}^c p_j(a)  \big(b_j- a_j \big) \vert \leq \ 
     \vert b_y - a_y  \vert + 
     \sum_{j=1}^c p_j(a) \vert b_j- a_j  \vert  \leq \ 2\, \Vert b-a\Vert_{\infty},
\end{align*}
since both summands in the middle can be bounded by the supremum norm.
\end{proof}

\subsection{Proof of Theorem \ref{thm:adversarial-training}} \label{app:thm-train}

\begin{proof}[Proof of Theorem \ref{thm:adversarial-training}]
We use Hoeffding's inequality on the independent random variables $D_1.\dots,D_n$, which are defined as 
$$D_i:=\underset{\Vert \delta\Vert_{} <\epsilon}{\mathrm{max}}\, \ell(f(x_i+\delta),y_i) - \ell(f(x_i),y_i),$$
based on independently drawn data points with probability distribution $\mathcal{P}(\mathcal{X})$, where the observations are $x_i\in\mathcal{X}$ with corresponding labels $y_i$. We can use the upper bound on $\ell$ to obtain that $0\leq D_i \leq C_2$ and conclude that
\begin{align*}
    &\mathbb{P}\left( \Big|\sum_{i=1}^n D_i - n \mathbb{E}[D] \Big| \geq t \right) \leq 2\cdot\mathrm{exp}\left(\frac{-2t^2}{n C_2^2}\right)\\
    \Rightarrow\quad &\mathbb{P}\left( \Big|\frac{1}{n}\sum_{i=1}^n D_i - \mathbb{E}[D] \Big| \leq C_2 \sqrt{\frac{\log(\rho/2)}{-2n}} \right) \geq 1- \rho.
\end{align*}
Thus, with probability of at least $1-\rho$ it holds that
\begin{align*}
   \mathbb{E}[D] &= \Big|  \mathcal{L}_{\mathrm{adv}}(f) - \mathcal{L}(f)  \Big| = \Big| \mathbb{E}_{(x,y)} \left[ \underset{\Vert \delta\Vert_{} <\epsilon}{\mathrm{max}}\, \ell(f(x+\delta),y) - \ell(f(x),y) \right] \Big|\\
    & \leq \Big| \frac{1}{n}\sum_{i=1}^n \underset{\Vert \delta\Vert_{} <\epsilon}{\mathrm{max}}\, \ell(f(x_i+\delta),y_i) - \ell(f(x_i), y_i)   \Big|  + C_2 \sqrt{\frac{\log(\rho/2)}{-2n}}.
\end{align*}
We can further bound the first term on the right-hand side, since the loss function $\ell(r,y)$ is $C_1$-Lipschitz in $\Vert\cdot\Vert_{\alpha}$ for $r\in f(\mathcal{X})$ and $f(x) = W g_{\theta}(x)$.
\begin{align*}
     &\Big| \frac{1}{n}\sum_{i=1}^n \underset{\Vert \delta\Vert_{} <\epsilon}{\mathrm{max}}\, \ell(W g_{\theta}(x_i+\delta),y_i) - \ell(W g_{\theta}(x_i), y_i)   \Big|\\
     & = \Big| \frac{1}{n}\sum_{i=1}^n \big| \ell(W g_{\theta}(x_i+\delta_i),y_i) - \ell(W g_{\theta}(x_i), y_i)\big|\, \Big|\\
     &\leq C_1 \frac{1}{n}\sum_{i=1}^n \Vert W g_{\theta}(x_i+\delta_i) - W g_{\theta}(x_i)\Vert_{\alpha},
\end{align*}
where $\delta_1,\dots,\delta_n\in \mathcal{X}$ with $\Vert \delta_i\Vert_{} <\epsilon$ are chosen appropriately. In a next step, we apply Lemma \ref{lem:norm-bounds} and the definition of $L_{\alpha}$ in the theorem statement,
\begin{align*}
    &C_1 \frac{1}{n}\sum_{i=1}^n \Vert W g_{\theta}(x_i+\delta_i) - W g_{\theta}(x_i)\Vert_{\alpha} \ \leq\ L_{\alpha}( W) \, C_1 \frac{1}{n}\sum_{i=1}^n \Vert  g_{\theta}(x_i+\delta_i) - g_{\theta}(x_i)\Vert_2.
\end{align*}
The $\delta_i$'s might not be the same that maximize the distance between the representations but their effect can be upper bounded by the maximum. Combining the derived bounds and using that $g_{\theta}(x_i+\delta_i) \leq \underset{\Vert \delta\Vert <\epsilon}{\mathrm{max}}\,g_{\theta}(x_i+\delta)$, we obtain our result.
\begin{align*}
   \mathcal{L}_{\mathrm{adv}}(f) - \mathcal{L}(f) &= \Big|  \mathcal{L}_{\mathrm{adv}}(f) - \mathcal{L}(f)  \Big| \\
   &\leq \ L_{\alpha}( W) \, C_1 \frac{1}{n}\sum_{i=1}^n \underset{\Vert \delta\Vert_{} <\epsilon}{\mathrm{max}}\, \Vert  g_{\theta}(x_i+\delta) - g_{\theta}(x_i)\Vert_2 + C_2 \sqrt{\frac{\log(\rho/2)}{-2n}}.
\end{align*}
\end{proof}

\subsection{Proof of Theorem \ref{thm:robust-classifier}}\label{app:thm-classifier}

\begin{proof}[Proof of Theorem \ref{thm:robust-classifier}]
We want to proof that the classifier $c(x)$ is robust in $x\in\mathcal{X}$, i.e., it holds that $c(x+\delta)=c(x)$ for all $\delta\in\mathbb{R}^d$ with $\Vert\delta\Vert<\epsilon$. The classifier predicts class $y$, i.e., $c(x)=y$, if the y-th value of the model $f(x)$ is larger than all other entries, i.e., $f(x)_y > f(x)_j$ for all $j\neq y$. We denote the y-th row of the linear layer's matrix $W$ by $W_y$, thus $f(x)_y= W_y g_{\theta}(x)$.\\
For an input $x$ with $c(x)=y$, we want to prove that $c(x+\delta)=y$ for any $\delta$ with $\Vert\delta\Vert<\epsilon$., i.e.,
\begin{align*}
    f(x+\delta)_y > f(x+\delta)_j,\ \text{ if }\ f(x)_y > f(x)_j.
\end{align*}

It holds that
\begin{align*}
     f(x+\delta)_y =  f(x+\delta)_j + ( W_j (g_{\theta}(x)-g_{\theta}(x+ \delta)))  + ( W_y g_{\theta}(x)-W_j g_{\theta}(x)) + ( W_y (g_{\theta}(x+\delta)-g_{\theta}(x))).
\end{align*}
Thus, for $ f(x+\delta)_y > f(x+\delta)_j$ to be true, it is necessary that
\begin{align}
     &( W_j (g_{\theta}(x)-g_{\theta}(x+ \delta))) + ( W_y g_{\theta}(x)-W_j g_{\theta}(x))  + ( W_y (g_{\theta}(x+\delta)-g_{\theta}(x))) \geq 0 \nonumber \\
     \Longleftrightarrow\quad & W_y g_{\theta}(x)-W_j g_{\theta}(x) \geq ( W_y - W_j) (g_{\theta}(x)-g_{\theta}(x+\delta))) . \label{eq:robust_cond} 
\end{align}
By Cauchy-Schwartz inequality follows that
\begin{align*}
    ( W_y - W_j) (g_{\theta}(x)-g_{\theta}(x+\delta))) &\leq \vert ( W_y - W_j) (g_{\theta}(x)-g_{\theta}(x+\delta)) \vert \leq \Vert W_y - W_j \Vert_2 \cdot \Vert g_{\theta}(x+\delta)-g_{\theta}(x)\Vert_2.
\end{align*} 
Since further $W_y g_{\theta}(x)-W_j g_{\theta}(x) = f(x)_y - f(x)_j$, the condition of equation (\ref{eq:robust_cond}) holds for any $j\neq y$ if
\begin{align*}
 f(x)_y - f(x)_j &=  \vert f(x)_y - f(x)_j\vert \geq \Vert W_y - W_j \Vert_2 \cdot \Vert g_{\theta}(x+\delta)-g_{\theta}(x)\Vert_2 .
\end{align*}

For $c(x+\delta)=c(x)$ to be true, the above condition needs to be true for all $j\neq y$ and this holds if
\begin{align*}
   \underset{j\neq y}{\min} \, \frac{\vert f(x)_y - f(x)_j\vert }{ \Vert W_y - W_j \Vert_2 }
   \ \geq\  \underset{\Vert\delta\Vert<\epsilon}{\max}\ \Vert g_{\theta}(x+\delta)-g_{\theta}(x)\Vert_2.
\end{align*}
\end{proof}

\section{Additional Experimental Results}\label{app:add-experiments}

We provide additional information on the datasets used and the employed hyperparameters in~\cref{app:sec-exp-setup}. A further application of~\cref{thm:adversarial-training} is presented in~\cref{sec:exp-mse}, where robustness transfer is analysed for a regression task. In~\cref{app:exp-stronger-attack}, we extend the analysis from~\cref{sec:exp-rob-score} by evaluating the robustness against stronger attacks. 
In~\cref{app:exp-stanford}, an empirical study compares the robustness transfer depending on the chosen transfer learning method. Finally, in~\cref{app:distribution-shift}, we investigate the effect of a shift in distribution between pretraining and downstream task data on the transferred robustness and the AS score.

\subsection{Experimental Setup for Section \ref{sec:exp}}\label{app:sec-exp-setup}

For our experiments, we consider two models robustly pretrained on \cifarhun{} \cite{cifar} and \imagenet{} \cite{deng2009imagenet}. \cifarhun{} consists of 32x32 RGB images with 100 different classes. \imagenet{}-1K consists of 256x256 RGB images of 1,000 different classes. \\
We investigate the performance of robustness transfer on \cifarten{}  \cite{cifar}, \fashionmnist{} \cite{FashionMNIST} and \intelimage{} Classification\footnote{\url{https://www.kaggle.com/datasets/puneet6060/intel-image-classification}}. The \cifarten{} dataset consists of 32x32 RGB images in 10 classes. \intelimage{} Classification consists of 150x150 RGB images of 6 natural scenes (buildings, forest, glacier, mountain, sea, and street). \fashionmnist{} is a dataset of Zalando's article images, where each example is a 28x28 grayscale image, associated with a label from 10 classes.

We use two models from RobustBench \cite{croce2020robustbench}, namely robust models trained on \cifarhun{} and \imagenet{}. For \cifarhun{}, we use a WideResNet-34-10 ($\approx$46M parameters) architecture~\citep{zagoruyko2016wide} which is adversarially trained using a combination
of simple and complex augmentations with separate batch
normalization layers~\cite{addepalli2022efficient}. For \imagenet{}, we use a WideResNet-50-2 ($\approx$66M parameters) architecture~\citep{zagoruyko2016wide} which is is also adversarially trained~\cite{SalmanIlyas,madry2018towards}.

For the model pretrained on \cifarhun{}, run 20 epochs of linear probing (LP) using a batch size of 128, and resize all input images to 32x32. 
For the model pretrained on \imagenet{}, we run 10 epochs of LP using a batch size of 32, and resize all input images to 256x256. We reduce the batch size to compensate for the increase in input resolution. For LP, we set the learning rate using a cosine annealing schedule~\citep{loshchilov2017sgdr}, with an initial learning rate of 0.01, and use stochastic gradient descent with a momentum of 0.9 on the cross-entropy loss.

To compute LHS and RHS, we must attack the cross-entropy loss on $f$ as well as the MSE on $g_{\theta}$. In both cases, we employ $L_{\infty}$-PGD attacks implemented in the foolbox package \citep{rauber2017foolbox}. As hyperparameters for the $L_{\infty}$-PGD attack, we choose 20 steps and a relative step size of 0.7, since this setting yields highest adversarial sensitivity (AS) scores over all the tasks.
The attack strength $\epsilon$ is set to $\epsilon=8/255$ for attacking the \cifarhun{} pretrained model and $\epsilon=1/255$ for the \imagenet{} model. Thus, we use the same $\epsilon$ on \cifarhun{} as used during the RobustBench pre-training and reduce the $\epsilon$ on \imagenet{} due to the constant pixel blocks introduced after resizing \cifarten{} and \fashionmnist{} images from 32x32 and 28x28 to 256x256, respectively. 

The experiments were run on a single GeForce RTX 3080 GPU and for each downstream task transfer learning or theory evaluation took between 30 minutes to 3 hours. 
 \subsection{Regression - MSE}\label{sec:exp-mse}
 
Here, we empirically evaluate~\cref{lem:adversarial-training} on regression tasks using the mean squared error (MSE) as the scoring criterion. As per~\cref{sec:use-cases}, we can apply~\cref{lem:adversarial-training} with Lipschitz constant $C=1$ and mapping function $L_{2}(W)$.

We consider the dSprites~\citep{dsprites17} dataset, which consists of 64x64 grayscale images of 2D shapes procedurally generated from 6 factors of variation (FoVs) which fully specify each sprite image: color, shape, scale, orientation, x-position (posX) and y-position (posY). The dataset consists of 737,280 images, of which we use an 80-20 train-test split. Here, we pre-train for the regression task of decoding an FoV from an image, i.e., we optimise the network $f$ to minimize MSE with the exact orientation of the sprite. For LP, we consider 4 downstream tasks, each corresponds to decoding a unique FoV, while one matches the task used for pre-training.

The model architecture is summarized in~\cref{tab:dsprites-model}. We consider $3$ different representation functions, network before training (with default kaiming uniform initialization~\citep{he2015delving}), after training, and after adversarial training. For the robust model, we utilize $L_{\infty}$-PGD attacks implemented in the foolbox package \citep{rauber2017foolbox} with 20 steps and a relative step size of 1.0. The attack strength $\epsilon$ is set to $\epsilon=8/255$. The model was trained for 10 epochs with a batch size of 512, learning rate of $1\mathrm{e}{-3}$ with a cosine annealing schedule~\citep{loshchilov2017sgdr}, and stochastic gradient descent with a momentum of 0.9.



In~\cref{table:MSE_lemma_bounds}, we present the left-hand side (LHS), the rescaled difference between clean and robust cross-entropy (CE) loss, and the right-hand side (RHS), the representation function's AS score, of~\cref{lem:adversarial-training}. To obtain the scores for LHS and RHS, we separately attack $f$ as well as the MSE on $g_{\theta}$, where in both cases the objective uses the MSE. For these attacks, we again utilize $L_{\infty}$-PGD, but also $L_{2}$-PGD attacks, both based on the foolbox~\citep{rauber2017foolbox} implementation, with 50 steps and a relative step size of 1.0. The attack strength $\epsilon$ is set to $\epsilon=16/255$ for both the attack types. As can be seen, LHS is indeed always upper bounded by RHS. Note that, for each pretraining and attack type combination, the RHS is the same over all latent factors, i.e. in each row, because the same images and the same representation function are used.


\begin{table}[]
    \centering
    \begin{tabular}{c c c}
    \toprule
        Layer & Layer Type & Activation and Normalization  \\ \midrule
         1 & Convolutional layer (c=6, k=5) & ReLu, Average Pooling, and Batch Norm \\
        2 & Convolutional layer (c=16, k=5) & ReLu, Average Pooling, and Batch Norm \\
        3 & Linear Layer (out=120) & ReLu and Batch Norm\\
        4 & Linear Layer (out=84) & ReLu and Batch Norm\\
        5 & Linear Layer (out=84) & ReLu and Batch Norm\\
        "head" & Linear Layer (out=1) & -\\
         \bottomrule
    \end{tabular}
    \caption{The table presents the model architecture used in experiments on MSE loss and the dSprites dataset. The convolutional layers are provided with number of output channels and kernel size, i.e., (c=channels, k=kernel\_size), where as the linear layers are listed together with the size of their output vector.}
    \label{tab:dsprites-model}
\end{table}

\begin{table}[h]
\centering
\begin{tabular}{c c c c c c c}
\toprule
Pretraining & Attack Type& & Scale & Orientation &  posX & posY   \\ \midrule
Adversarial & $L_{\infty}$-PGD & LHS & 0.10 & 0.09 & 0.02 & 0.02\\
  & & RHS & 2.29 & 2.29 & 2.29 & 2.29\\
Standard & $L_{\infty}$-PGD & LHS & 3.32 & 1.20 & 2.02 & 2.95\\
 & & RHS & 23.80 & 23.80 & 23.80 & 23.80\\
Random & $L_{\infty}$-PGD &  LHS & 2.33 & 2.29 & 2.43 & 1.84\\
 &  &  RHS & 29.25 & 29.25 & 29.25 & 29.25\\
Adversarial & $L_{2}$-PGD & LHS & 0.004 & 0.003 & 0.001 &
0.001\\
  & & RHS & 0.15 & 0.15 & 0.15 & 0.15\\
Standard & $L_{2}$-PGD & LHS & 0.33 & 0.22 & 0.17 & 0.19\\
 & & RHS & 3.15 & 3.15 & 3.15 & 3.15\\
Random & $L_{2}$-PGD &  LHS & 0.19 & 0.21 & 0.14 & 0.12\\
 &  &  RHS & 1.69 & 1.69 & 1.69 & 1.69\\
\bottomrule
\end{tabular}
\caption{Left-hand side (LHS) and right-hand side (RHS) of~\cref{lem:adversarial-training} using a model pretrained on the \emph{orientation} latent factor of dSprites.}
\label{table:MSE_lemma_bounds}
\end{table}

\subsection{Classification Criterion - Attack Strength}\label{app:exp-stronger-attack}

Here, we study the hypothesis stated in~\cref{sec:exp-class} that datapoints fulfilling the bound in~\cref{thm:robust-classifier} are still robust against an increased attack strength. 
We repeat the experiments from~\cref{sec:exp-class} for the model pretrained on \cifarhun{}, where the attack strength is $\epsilon=8/255$, since, for this case, a subset of each dataset fulfilled the equation. Additionally, we compute the robust accuracy against stronger attacks employing double and three times the attack strength, i.e., using $2\epsilon=16/255$ and $3\epsilon=24/255$.

In~\cref{table:robust_classification_attack_strength}, we observe that indeed most of the datapoints fulfilling the bound in~\cref{eq:robust-classification-condition} are still robust against the doubled attack strength, which is suggested by the robust accuracy for $2\epsilon$ being larger than the proportion that fulfilled the bound. Most of these datapoints, approximately $50\%$, are even robust against $3\epsilon$. Further analysis is needed to better understand the observed robust accuracy scores and to derive a precise connection to the bound from~\cref{thm:robust-classifier}.

\begin{table}[h]
\centering
\begin{tabular}{c c c c c }
\toprule
 & & \multicolumn{3}{c}{\cifarhun{}}   \\ 
  \cmidrule(r){3-5}    
  & attack strength & \cifarten{}  & \fashionmnist{} & \intelimage{}   \\ \midrule
  clean accuracy & 0 &  70.9\% & 84\% & 78.5\%  \\
robust accuracy & $\epsilon$ & 42.2\% & 56.2\% & 56.8\% \\
prop. fulfilled & $\epsilon$ & 11.3\% & 4.6\% & 27.1\% \\
robust accuracy & $2\epsilon$ & 18.4\% & 28.5\% & 31.4\%  \\
robust accuracy & $3\epsilon$ & 6.6\% & 13.1\% & 13.4\%  \\
\bottomrule
\end{tabular}
\caption{For the \cifarhun{} pretraining dataset, we report clean accuracy, the proportion of images fulfilling the theoretical bound, and robust accuracy for several attack strengths for each of the \cifarten{}, \fashionmnist{} and \intelimage{} downstream tasks.
}
\label{table:robust_classification_attack_strength}
\end{table}

\subsection{Comparison Study between LP, FT and LP-FT}\label{app:exp-stanford}

Here, we compare robustness transfer on classification tasks for three transfer learning techniques, linear probing, full finetuning, and linear probing then finetuning \citep{kumar2022fine}. As explained before, linear probing (LP) refers to freezing all pretrained weights and only optimizing the linear classification head.
In full finetuning (FT) all pretrained parameters are updated during transfer learning.
Linear probing then finetuning  (LP-FT) combines LP and FT sequentially, as after LP is done, FT is applied to all parameters.

For the model pretrained on \cifarhun{}, we choose a batch size of 128, and run 20 epochs for \ft{}, \lp{} and 10 epochs of \ft{} after \lp{} for \lpft{}. 
For the model pretrained on \imagenet{}, we choose a batch size of 32 to compensate for the increase in input resolution to 256x256. We run 10 epochs for \ft{} and \lp{} and 5 epochs for \lpft{}. 
In both settings, we use half as many epochs for finetuning after linear probing in \lpft{}, as in \citep{kumar2022fine}. For all methods we apply a cosine annealing learning rate schedule~\citep{loshchilov2017sgdr}, with an initial learning rate of 0.01 for \lp{} on \cifarhun{}, and 0.001 for all other scenarios.
We use stochastic gradient descent with a momentum of 0.9 on the cross-entropy loss for all experiments. \\
We evaluate the adversarial robustness on the test set after training by employing the $L_{\infty}$-PGD attack provided by the foolbox package \cite{rauber2017foolbox}, with 20 steps and a relative step size of $0.7$.  
As the attack strength, we choose $\epsilon = 8/255$ for the model pretrained on \cifarhun{}. Due to the increase in input resolution for the \imagenet{} model, we apply $\epsilon = 1/255$. 

\cref{table:beyond_lp} in~\cref{sec:finetuning} shows the performance of each transfer learning method applied to the model robustly pretrained on \cifarhun{}. For all datasets, \lp{} preserves the most robust accuracy, while performing worst on natural examples. 
\ft{} and \lpft{} show similar natural and robust accuracy. Only for the \intelimage{} dataset do we observe a significant difference (10\%) in the robustness of \lpft{} and \ft{}. In contrast, when pretraining on \imagenet{}, \lp{} has the worst clean and robust accuracy. Instead, \lpft{} has the highest robust accuracy for all datasets, while \ft{} achieves the highest clean accuracy. Although \ft{} has the highest clean accuracy for all datasets, the difference between \ft{} and \lpft{} is marginal. 

Additionally,~\cref{table:result_stanford_theory} provides confirmation that our theory indeed holds for the other transfer learning approaches as well. The difference is that for \ft{} and \lpft{} the AS score changes during transfer learning and thus the score computed prior transfer learning is no longer predictive of the transferred robustness. Still, it is an interesting observation that indeed the AS score shrinks during \lpft{} when it improves robustness transfer compared to \lp{}.


\begin{table}[h]
\centering
\begin{tabular}{c c c c c c c c c c}
\toprule
\cifarhun{} & \multicolumn{3}{c}{{\cifarten{}}} & \multicolumn{3}{c}{{\fashionmnist{}}}  & \multicolumn{3}{c}{{\intelimage{}}} \\ 
\textbf{Method}  & $\Delta$CE  &  $L_{\infty}( W )$  & AS  &$\Delta$CE &  $L_{\infty}( W )$  & AS  &$\Delta$CE   &  $L_{\infty}( W )$  & AS   \\ \midrule
\ft{} & 8.7  &  1.31& 8.93& 4.49  & 1.26 & 6.68 & 5.62 & 1.1 & 6.86 \\
\lp{} & 0.73& 4.4 & 0.98 & 0.8 & 4.59 & 2.11 & 0.47 & 3.05 & 1.05\\
\lpft{} & 14.1 & 4.42 & 6.44& 5.98 & 4.6 & 4.27 & 3.06 & 3.07 & 3.02\\ \midrule
\imagenet{} & \multicolumn{3}{c}{{\cifarten{}}} & \multicolumn{3}{c}{{\fashionmnist{}}}  & \multicolumn{3}{c}{{\intelimage{}}} \\ 
\textbf{Method}  & $\Delta$CE & $L_{\infty}( W )$ & AS  &$\Delta$CE & $L_{\infty}( W )$ & AS  &$\Delta$CE &  $L_{\infty}( W )$ & AS   \\ \midrule
\ft{} &4.39 & 1.1 & 12.6  & 2.28  &  1.1 &  9.0 & 0.32  & 0.94 & 3.8 \\
\lp{} & 4.14 & 1.86  & 15.7  & 9.4  & 2.02 & 53.2 &0.14  &  1.39  & 2.7 \\
\lpft{} & 3.18  & 1.9  & 9.8 & 0.92  &  2.06 & 5.1 & 0.37  & 1.0  & 3.9 \\
\bottomrule
\end{tabular}
\caption{Absolute difference between clean and robust CE loss ($\Delta$CE ), $L_{\infty}( W )$ from~\cref{thm:adversarial-training} and the adversarial sensitivity score (AS) for the three transfer learning approaches using the model robustly pretrained on \cifarhun{} or \imagenet{}. }
\label{table:result_stanford_theory}
\end{table}


\subsection{Distribution Shift Experiments}\label{app:distribution-shift}


In this section, we analyse the impact of distribution shift between pretraining and downstream data on the adversarial sensitivity score and thereby on robustness transfer via linear probing. 
First, we consider the distribution shift between \cifarten{} and \imagenet{} and, in particular, its dependence on how the smaller \cifarten{} images are resized before applying the representation function. 
Second, we employ ideas from prior work~\citep{hendrycks2018benchmarking}\footnote{\url{https://github.com/hendrycks/robustness/blob/master/ImageNet-C/create_c}} to create increasingly corrupted versions of \cifarhun{} to simulate varying shifts in distribution. 

In our experiments in~\cref{sec:exp}, we resize the 32x32 \cifarten{} input images to 256x256 for the \imagenet{} model, and observe a strong distribution shift by a severely increased AS score. Here, we evaluate if increasing the image size before applying the ResNet model causes this shift, or if it is inherent in the data. In~\cref{tab:app-dist-shift}, we repeat our linear probing experiment for the \cifarten{} to \imagenet{} case on differently rescaled input images. The results indicate that reducing the increase in the input image size improves robustness transfer, at the cost of a drop in clean performance. However, even for a smaller input image size, the AS score remains increased indicating a tangible distribution shift remains.

\begin{table}[h!]
\vspace{0.1cm}
\centering
\begin{tabular}{ c c c c c c c}
\toprule
input image  & clean /robust CE  & clean /robust acc & prop. fulfilled & LHS & RHS \\ \midrule
256x256 & 0.22 / 4.36 & 92\% / 16\%  & 0.1\% & 1.14 & 15.7 \\
128x128 & 0.41 / 1.31 & 86\% / 57\% & 0.2\% & 0.28 & 12.7 \\
64x64 & 1.04 / 1.49 & 64\% / 48\% & 0.1\% & 0.14 & 8.2 \\  
\bottomrule
\end{tabular}
\caption{
The table contains clean and robust CE loss and accuracy, the proportion of images fulfilling the theoretical bound in~\cref{eq:robust-classification-condition}, LHS and RHS of~\cref{lem:adversarial-training}, when performing linear probing from an \imagenet{} pre-trained model based on differently resized \cifarten{} images.}\label{tab:app-dist-shift}
\end{table}

In the following, we consider an increasingly corrupted version of \cifarhun{} to simulate distribution shift following prior work~\citep{hendrycks2018benchmarking}. 
In particular, we use the  Gaussian noise (\cref{table:result_shift_c}) and Zoom blur (\cref{table:result_shift_zoom}) corruption types with increasing severity to simulate an increasing distribution shift.
The experimental results from both distortion methods confirm that an increasing shift between the data distributions leads to an increase in the AS score and poorer robustness transfer via linear probing. 
The different corruption strength are used as suggested in~\citep{hendrycks2018benchmarking} and are stated in~\cref{table:corruprion_levels}.

\begin{table}[h]
\centering
\begin{tabular}{c c c c}
\toprule
\textbf{Approach}  & C1 & C2  & C5  \\ \midrule
Gaussian Noise & 0.04, & 0.06  & 0.10  \\
Zoom Blur & [1, 1.06] & [1, 1.11]& [1, 1.26] \\
\bottomrule
\end{tabular}
\caption{Definition of the corruption strength considered in our experiments. The Gaussian noise corruption severity is defined by the scale, i.e. the variance of the Gaussian noise distribution. The Zoom blur is defined via the interval of zoom factors. For each 0.01 increment in the interval, a zoomed-in image is created, and all zoomed-in images are summed to create a blurred image.}
\label{table:corruprion_levels}
\end{table}

\begin{table}[h]
\centering
\begin{tabular}{l c c c c }
\toprule
\textbf{Metric}  &\cifarhun{}  & \cifarhun{}-C1 & \cifarhun{}-C2  & \cifarhun{}-C5  \\ \midrule
Acc. & 67.3\% & 66.5\% & 65.3\% & 63.6\% \\
Robust Acc. & 46.8\% & 44.5\% & 43.3\% & 40.9\% \\
CE & 1.28 & 1.32 & 1.37 & 1.44 \\
robust CE & 2.1 & 2.21 & 2.3 & 2.44 \\
difference CE & 0.82 & 0.89 & 0.93 & 1.0 \\
$L_{\infty}( W )$ & 3.31 & 3.34 & 3.36 & 3.47 \\
LHS (Lem.\ref{lem:adversarial-training})  & 0.124 & 0.133 & 0.138 & 0.144 \\
AS score  & 0.835 & 0.892 & 0.915 & 0.949 \\
\bottomrule
\end{tabular}
\caption{Various metrics for transfer learning from a model robustly pretrained on \cifarhun{} to an increasingly corrupted \cifarhun{} dataset using Gaussian noise. C1 indicates severity level 1 of the corruption is applied. }
\label{table:result_shift_c}
\end{table}

\begin{table}[h]
\centering
\begin{tabular}{l c c c c }
\toprule
\textbf{Metric}  &\cifarhun{}  & \cifarhun{}-C1 & \cifarhun{}-C2  & \cifarhun{}-C5  \\ \midrule
Acc. & 67.3\% & 65.4\% & 65.7\% & 63.9\% \\
Robust Acc. & 46.8\% & 41.2\% & 40.3\% & 35.5\% \\
CE & 1.28 & 1.36 & 1.36 & 1.43 \\
robust CE & 2.1 & 2.39 & 2.42 & 2.73 \\
difference CE & 0.82 & 1.03 & 1.06 &  1.3 \\
$L_{\infty}( W )$ & 3.31 & 3.36 & 3.35 & 3.42 \\
LHS (Lem.\ref{lem:adversarial-training}) & 0.124 & 0.153 & 0.158 & 0.19 \\
AS score  & 0.835 & 1.001 & 1.031 & 1.195 \\
\bottomrule
\end{tabular}
\caption{Various metrics for transfer learning from  a model robustly pretrained on \cifarhun{} to an increasingly corrupted \cifarhun{} dataset using Zoom blur. C1 indicates severity level 1 of the corruption is applied. }
\label{table:result_shift_zoom}
\end{table}

\end{document}